\newtheorem{proposition}{Proposition}
\newtheorem{theorem}{Theorem}
\title{Generative Adversarial Nets}
\author{
Ian J. Goodfellow,
\ \  Jean Pouget-Abadie\thanks{Jean Pouget-Abadie is visiting Universit\'e de Montr\'eal from Ecole Polytechnique.
},\ \  Mehdi Mirza,\ \  Bing Xu,\ \ David Warde-Farley,\\
{\bf Sherjil Ozair\thanks{Sherjil Ozair is visiting Universit\'e de Montr\'eal from Indian Institute of Technology Delhi},\ \ Aaron Courville,\ \ Yoshua Bengio\thanks{Yoshua Bengio is a CIFAR Senior Fellow.}}\\
D\'epartement d'informatique et de recherche op\'erationnelle\\
Universit\'e de Montr\'eal\\
Montr\'eal, QC H3C 3J7 \\
}
\begin{document}

\maketitle

\begin{abstract}
We propose a new framework for estimating generative models
via an adversarial process, in which we simultaneously train
two models: a generative model $G$ that captures the data
distribution, and a discriminative model $D$ that estimates the
probability that a sample came from the training data rather than $G$.
The training procedure for $G$ is to maximize the probability of $D$ making
a mistake. This framework corresponds to a minimax two-player game. In the
space of arbitrary functions $G$ and $D$, a unique solution exists, with $G$
recovering the training data distribution and $D$ equal to $\frac{1}{2}$
everywhere. In the case where $G$ and $D$ are defined by multilayer perceptrons,
the entire system can be trained with backpropagation. There is no need for any Markov chains
or unrolled approximate inference networks during either training or generation
of samples. Experiments demonstrate the potential of the framework
through qualitative and quantitative evaluation of the generated
samples. 
\end{abstract}

\section{Introduction}

The promise of deep learning is to discover rich, hierarchical models~\citep{Bengio-2009-book} that represent
probability distributions over the kinds of data encountered in artificial intelligence applications,
 such as natural images, audio waveforms containing speech, and symbols in natural language corpora.
So far, the most striking successes in deep learning have involved discriminative models,
usually those that map a high-dimensional, rich sensory input to a class 
label~\citep{Hinton-et-al-2012,Krizhevsky-2012-small}. These
striking successes have primarily been based on the backpropagation and dropout algorithms, using
piecewise linear units
~\cite{Jarrett-ICCV2009,Glorot+al-AI-2011-small,Goodfellow_maxout_2013} which have a particularly well-behaved gradient
. Deep {\em generative} models have had less of an impact, due to the difficulty of approximating
many intractable probabilistic computations that arise in maximum likelihood estimation and
related strategies, and due to difficulty of leveraging the benefits of piecewise linear units in
the generative context. We propose a new generative model estimation procedure that sidesteps these difficulties.
\footnote{All code and hyperparameters available at \url{http://www.github.com/goodfeli/adversarial}}

In the proposed {\em adversarial nets} framework, the generative model is pitted against an
adversary: a discriminative model that learns to determine whether a sample is
from the model distribution or the data distribution. The generative model can be thought of
as analogous to a team of counterfeiters, trying to produce fake currency and use it without
detection, while the discriminative model is analogous to the police, trying to detect the
counterfeit currency. Competition in this game drives both teams to improve their methods until the counterfeits
are indistiguishable from the genuine articles.

This framework can yield specific training algorithms for many kinds of model and optimization
algorithm. In this article, we explore the special case when the generative model generates
samples by passing random noise through a multilayer perceptron, and the discriminative model
is also a multilayer perceptron. We refer to this special case as {\em adversarial nets}.
In this case, we can train both models using only
the highly successful backpropagation and dropout algorithms~\citep{Hinton-et-al-arxiv2012} 
and sample from the generative
model using only forward propagation. No approximate inference or Markov chains are necessary.

\section{Related work}
\label{sec:related}

%



An alternative to directed graphical models with latent variables are undirected graphical models
with latent variables, such as restricted Boltzmann machines (RBMs)~\citep{Smolensky86,Hinton06}, 
deep Boltzmann machines (DBMs)~\citep{Salakhutdinov+Hinton-2009-small} and their numerous variants.
The interactions within such models are represented as the product of
unnormalized potential functions, normalized by a global
summation/integration over all states of the random variables.
This quantity (the \textit{partition function}) and
its gradient are intractable for all but the most trivial
instances, although they can be estimated by Markov chain Monte
Carlo (MCMC) methods. Mixing poses a significant problem for learning
algorithms that rely on
MCMC~\citep{Bengio-et-al-ICML2013,Bengio-et-al-ICML2014}.

Deep belief networks (DBNs)~\citep{Hinton06} are hybrid models containing a single undirected layer and several directed layers.
While a fast approximate layer-wise training criterion exists, DBNs incur the computational
difficulties associated with both undirected and directed models.

Alternative criteria that do not approximate or bound
the log-likelihood have also been proposed, such as score matching~\citep{Hyvarinen-2005-small}
and noise-contrastive estimation (NCE)~\citep{Gutmann+Hyvarinen-2010-small}.
Both of these require the learned probability density to be analytically specified
up to a normalization constant. Note that in many interesting generative models with several
layers of latent variables (such as
DBNs and DBMs), it is not even possible to derive a tractable unnormalized probability density.
Some models such as denoising auto-encoders~\citep{VincentPLarochelleH2008-small} and contractive
autoencoders have learning rules very similar to score matching applied to RBMs.
In NCE, as in this work, a discriminative training criterion is
employed to fit a generative model. However, rather than fitting a separate discriminative
model, the generative model itself is used to discriminate generated data from samples
a fixed noise distribution. Because NCE uses a fixed noise distribution, learning slows
dramatically after the model has learned even an approximately correct distribution over
a small subset of the observed variables.

%

Finally, some techniques do not involve defining a probability distribution explicitly,
but rather train a generative machine to draw samples from the desired distribution.
This approach has the advantage that such machines can be designed to be trained by
back-propagation. Prominent recent work in this area includes the generative stochastic network
(GSN) framework~\citep{Bengio-et-al-ICML2014}, which extends generalized denoising
auto-encoders~\citep{Bengio-et-al-NIPS2013}: both can be seen as defining a
parameterized Markov chain, i.e., one learns the parameters of a machine that
performs one step of a generative Markov chain.
Compared to GSNs, the adversarial nets framework does not require a Markov chain for
sampling. Because adversarial nets do not require feedback loops during generation,
they are better able to leverage piecewise linear
units~\cite{Jarrett-ICCV2009,Glorot+al-AI-2011-small,Goodfellow_maxout_2013},
which improve the performance of backpropagation but have problems with unbounded activation when used ina feedback loop.
More recent examples of training a generative machine by back-propagating into it
include recent work on auto-encoding variational Bayes~\citep{Kingma+Welling-ICLR2014} 
and stochastic backpropagation~\citep{Rezende-et-al-arxiv2014}.

\section{Adversarial nets}

The adversarial modeling framework is most straightforward to apply when the models are both
multilayer perceptrons. To learn the generator's distribution $p_g$ over data $\bm{x}$, we
define a prior on input noise variables $p_{\bm{z}}(\bm{z})$, then represent a
mapping to data space as $G(\bm{z}; \theta_g)$, where $G$ is a differentiable function
represented by a multilayer perceptron with parameters $\theta_g$. We also define a second
multilayer perceptron $D(\bm{x}; \theta_d)$ that outputs a single scalar. $D(\bm{x})$ represents
the probability that $\bm{x}$ came from the data rather than $p_g$. We train $D$ to maximize the
probability of assigning the correct label to both training examples and samples from $G$.
We simultaneously train $G$ to minimize $\log(1-D(G(\bm{z})))$:

In other words, $D$ and $G$ play the following two-player minimax game with value function $V(G, D)$: 

\begin{equation}
\label{eq:minimaxgame-definition}
\min_G \max_D V(D, G) = \mathbb{E}_{\bm{x} \sim p_{\text{data}}(\bm{x})}[\log D(\bm{x})] + \mathbb{E}_{\bm{z} \sim p_{\bm{z}}(\bm{z})}[\log (1 - D(G(\bm{z})))].
\end{equation}

In the next section, we present a theoretical analysis of adversarial nets,
essentially showing that the training criterion allows one to recover the data
generating distribution as $G$ and $D$ are given enough capacity, i.e., in the
non-parametric limit. See Figure~\ref{fig:intuition} for a less formal, more pedagogical
explanation of the approach.
In practice, we must implement the game using an iterative, numerical approach. Optimizing $D$ to completion in the
inner loop of training is computationally prohibitive,
and on finite datasets would result in overfitting. Instead, we alternate between $k$ steps
of optimizing $D$ and one step of optimizing $G$. This results in $D$ being maintained
near its optimal solution, so long as $G$ changes slowly enough. This strategy is analogous
to the way that SML/PCD~\citep{Younes1999,Tieleman08} training maintains samples from a Markov chain from one
learning step to the next in order to avoid burning in a Markov chain as part of the inner loop
of learning. The procedure is formally presented
in Algorithm~\ref{alg:AGF}.

In practice, equation~\ref{eq:minimaxgame-definition} may not provide sufficient gradient for $G$ to learn
well. Early in learning, when $G$ is poor, $D$ can reject samples with high confidence because they are
clearly different from the training data. In this case, $\log ( 1- D(G(\bm{z})))$ saturates. Rather than
training $G$ to minimize $\log (1 - D(G(\bm{z})))$ we can train $G$ to maximize $\log D(G(\bm{z}))$.
This objective function results in the same fixed point of the dynamics of $G$ and $D$ but provides much
stronger gradients early in learning.


\begin{figure}[h]
\begin{tabular}{m{3cm}m{3cm}m{3cm}m{0.1cm}m{3cm}}
    \includegraphics[width=3cm, height=4cm]{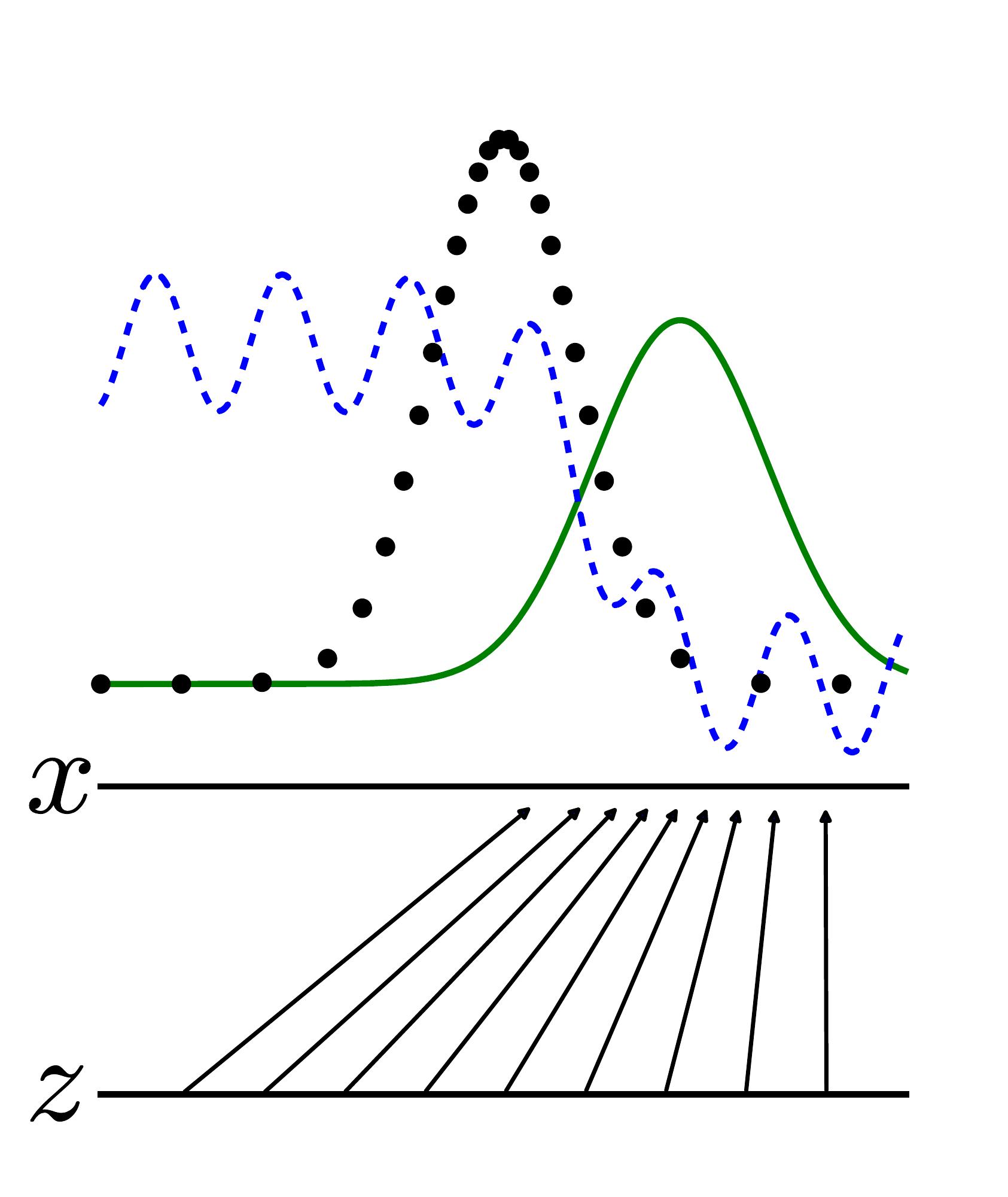} 
    &  
    \includegraphics[width=3cm, height=4cm]{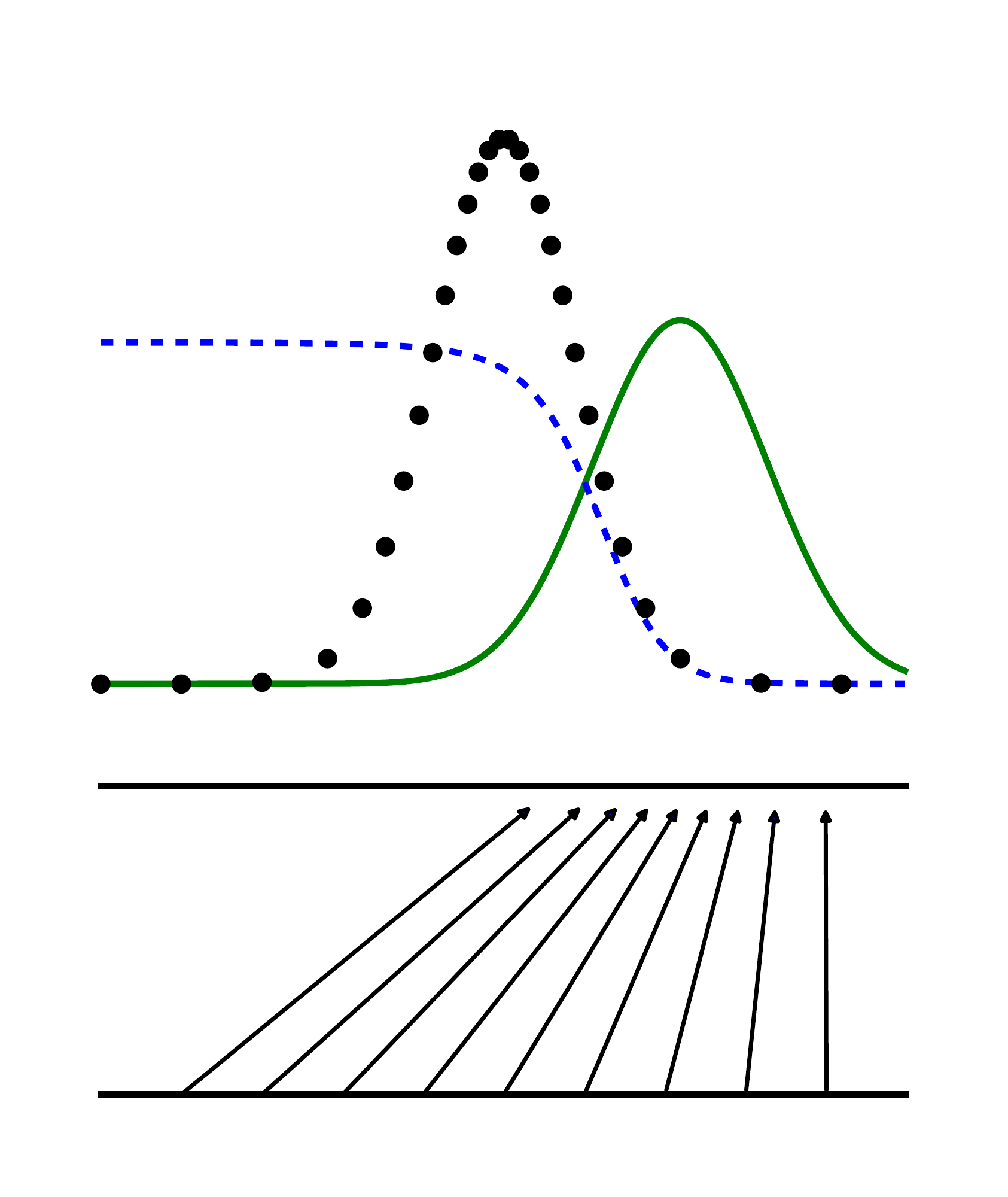}
    & 
    \includegraphics[width=3cm, height=4cm]{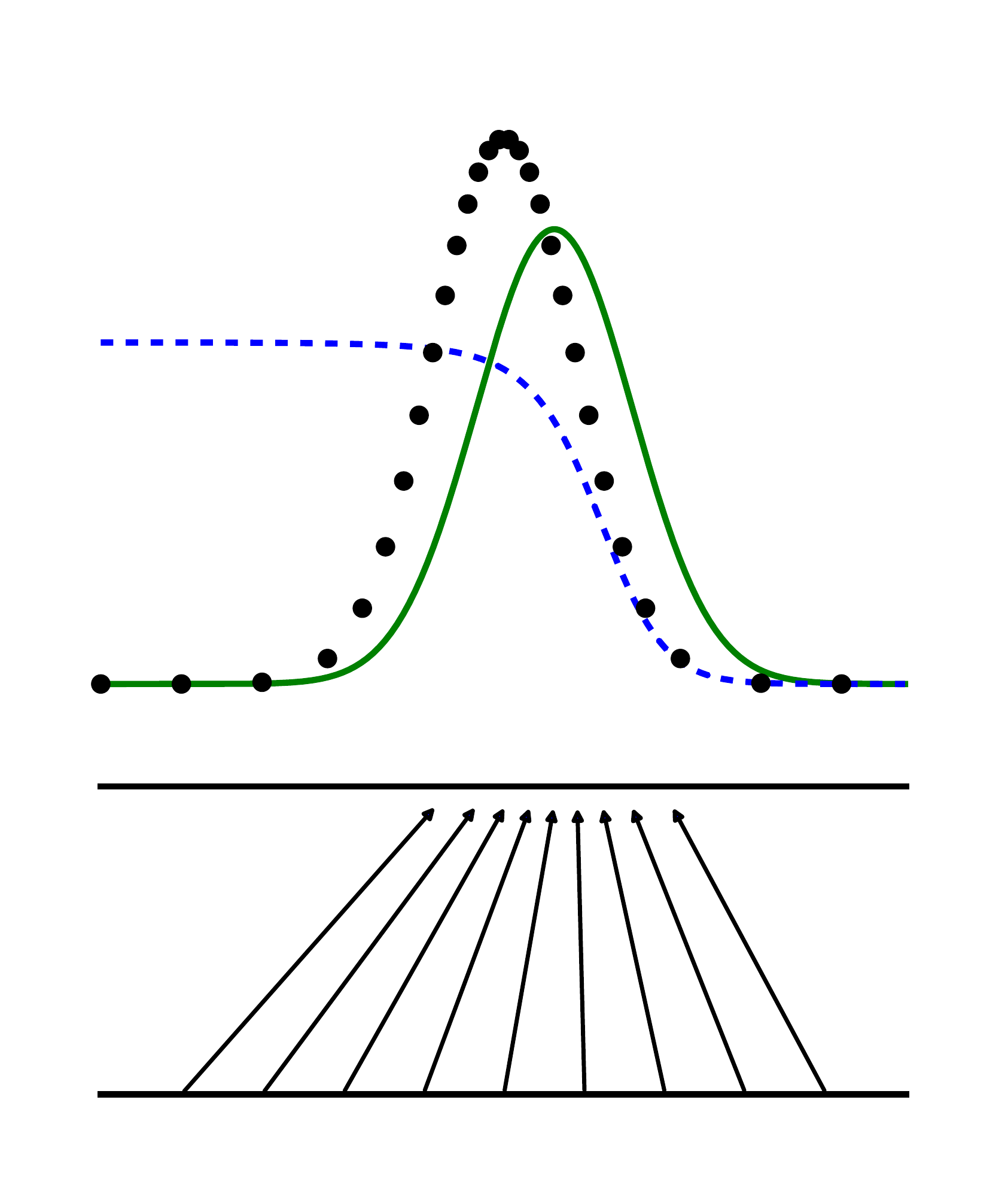}
    &
    \dots
    &
    \includegraphics[width=3cm, height=4cm]{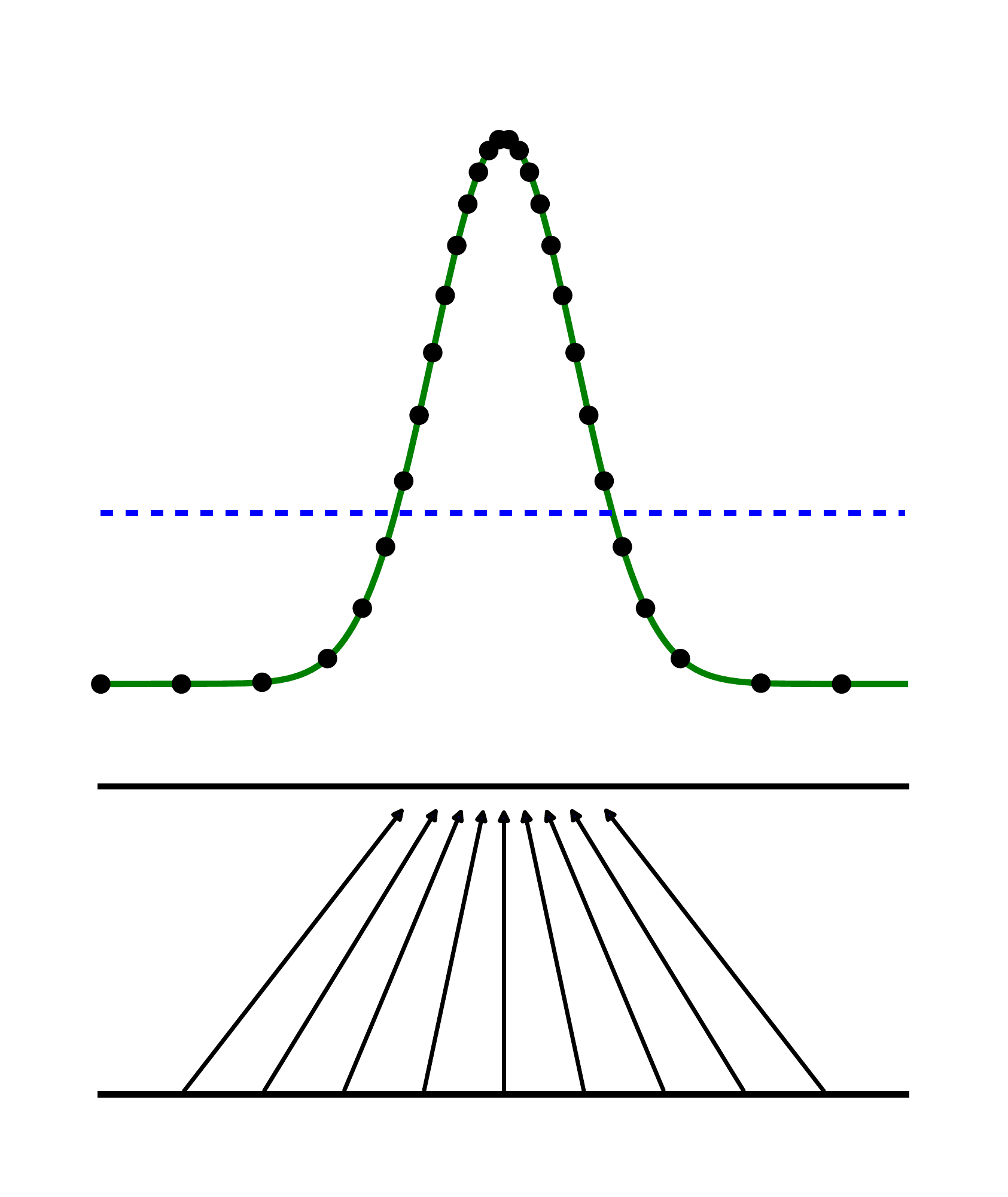}
    \\
    \centering (a)
    & 
    \centering (b) 
    & 
    \centering (c) 
    &  
    & 
    \centering (d)
\end{tabular}

\caption{\small
Generative adversarial nets are trained by simultaneously updating the \textbf{d}iscriminative 
distribution ($D$, blue, dashed line) so that it discriminates between samples from
the data generating distribution (black, dotted line)
$p_{\bm{x}}$ 
from those of the \textbf{g}enerative distribution $p_g$ (G) (green, solid line).
The lower horizontal line is the domain from which $\bm{z}$ is sampled, in this case uniformly.
The horizontal line above is part of the domain of $\bm{x}$. The upward arrows show
how the mapping $\bm{x}=G(\bm{z})$ imposes the non-uniform distribution $p_g$ on transformed samples.
$G$ contracts in regions of high density and expands in regions of low density of $p_g$.
(a) Consider an adversarial pair near convergence: $p_g$ is similar to $p_\text{data}$ and
$D$ is a partially accurate classifier.
(b) In the inner loop of the algorithm $D$ is trained to discriminate samples from data,
converging to
$D^*(\bm{x}) = 
\frac{
    p_\text{data}(\bm{x})
    }{
        p_\text{data}(\bm{x}) + p_g(\bm{x})}
$. 
(c) After an update to $G$, gradient of $D$ has guided $G(\bm{z})$ to flow to regions that are more likely
to be classified as data.
(d) After several steps of training, if $G$ and $D$ have enough capacity, they will reach a point at which
both cannot improve because $p_g = p_\text{data}$.
The discriminator is unable to differentiate between the two 
distributions, i.e. $D(\bm{x}) = \frac{1}{2}$.
}
\label{fig:intuition}
\end{figure}

\begin{algorithm}[ht]
\caption{\small Minibatch stochastic gradient descent training of generative adversarial nets.
The number of steps to apply to the discriminator, $k$, is a hyperparameter. We used $k=1$, the
least expensive option, in our experiments.
}
\begin{algorithmic}
\label{alg:AGF}
\FOR{number of training iterations}
  \FOR{$k$ steps}
    \STATE{$\bullet$ Sample minibatch of $m$ noise samples $\{ \bm{z}^{(1)}, \dots, \bm{z}^{(m)} \}$ from noise prior $p_g(\bm{z})$.}
    \STATE{$\bullet$ Sample minibatch of $m$ examples $\{ \bm{x}^{(1)}, \dots, \bm{x}^{(m)} \}$ from data generating distribution $p_\text{data}(\bm{x})$.}
    \STATE{$\bullet$ Update the discriminator by ascending its stochastic gradient:
        \[
            \nabla_{\theta_d} \frac{1}{m} \sum_{i=1}^m \left[
            \log D\left(\bm{x}^{(i)}\right)
            + \log \left(1-D\left(G\left(\bm{z}^{(i)}\right)\right)\right)
            \right].
        \]}
   \ENDFOR
  \STATE{$\bullet$ Sample minibatch of $m$ noise samples $\{ \bm{z}^{(1)}, \dots, \bm{z}^{(m)} \}$ from noise prior $p_g(\bm{z})$.}
    \STATE{$\bullet$ Update the generator by descending its stochastic gradient:
        \[
            \nabla_{\theta_g} \frac{1}{m} \sum_{i=1}^m
            \log \left(1-D\left(G\left(\bm{z}^{(i)}\right)\right)\right)
            .
        \]}
  \ENDFOR
  \\The gradient-based updates can use any standard gradient-based learning rule. We used momentum in our experiments.
\end{algorithmic}
\end{algorithm}

\section{Theoretical Results}
\label{sec:theory}

The generator $G$ implicitly defines a probability distribution $p_g$ as
the distribution of the samples $G(\bm{z})$ obtained when $\bm{z} \sim
p_{\bm{z}}$. Therefore, we would like Algorithm~\ref{alg:AGF} to converge to a
good estimator of $p_\text{data}$, if given enough capacity and training time. The
results of this section are done in a non-parametric setting, e.g. we represent a
model with infinite capacity by studying convergence in the space of probability
density functions.


We will show in section~\ref{sec:global-optimality} that this minimax game
has a global optimum for $p_g = p_\text{data}$. We will then show in
section~\ref{sec:convergence-of-algorithm} that Algorithm~\ref{alg:AGF}
optimizes Eq~\ref{eq:minimaxgame-definition}, thus obtaining the desired
result.

\subsection{Global Optimality of $p_g=p_\text{data}$}
\label{sec:global-optimality}

We first consider the optimal discriminator $D$ for any given generator $G$.

\begin{proposition}
For $G$ fixed, the optimal discriminator $D$ is
\begin{equation}
\label{eq:optimal-D}
   D^*_G(\bm{x}) = \frac{p_\text{data}(\bm{x})}{p_\text{data}(\bm{x}) + p_g(\bm{x})}
\end{equation}
\end{proposition}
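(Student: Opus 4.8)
The plan is to reduce the optimization over the infinite-dimensional space of discriminators to an elementary one-variable calculus problem solved pointwise in $\bm{x}$. First I would rewrite the value function $V(D,G)$ as a single integral over data space. The second term is an expectation over $\bm{z} \sim p_{\bm{z}}$, but since $G$ pushes $p_{\bm{z}}$ forward to $p_g$ by definition of $p_g$, the law of the unconscious statistician gives $\mathbb{E}_{\bm{z}\sim p_{\bm{z}}}[\log(1-D(G(\bm{z})))] = \int_{\bm{x}} p_g(\bm{x}) \log(1 - D(\bm{x}))\, d\bm{x}$. Combining with the first term yields
\[
V(D,G) = \int_{\bm{x}} \Big( p_\text{data}(\bm{x}) \log D(\bm{x}) + p_g(\bm{x}) \log(1 - D(\bm{x})) \Big)\, d\bm{x}.
\]

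The key observation is that $D$ is an arbitrary function, so its values at distinct points $\bm{x}$ can be chosen independently; maximizing the integral is therefore equivalent to maximizing the integrand separately for each $\bm{x}$. Fixing $\bm{x}$ and writing $a = p_\text{data}(\bm{x})$, $b = p_g(\bm{x})$, and $y = D(\bm{x}) \in [0,1]$, I would maximize $f(y) = a \log y + b \log(1-y)$. Setting $f'(y) = a/y - b/(1-y)$ to zero gives $y = a/(a+b)$, which is exactly the claimed formula $D^*_G(\bm{x}) = p_\text{data}(\bm{x})/(p_\text{data}(\bm{x}) + p_g(\bm{x}))$. Since $f''(y) = -a/y^2 - b/(1-y)^2 < 0$, the function is strictly concave and this critical point is the unique global maximum on $(0,1)$.

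The main thing to be careful about is not the calculus but the domain. The stationarity argument needs $(a,b)\neq(0,0)$, so the formula is only meaningful — and $D$ only needs to be defined — on $\mathrm{Supp}(p_\text{data}) \cup \mathrm{Supp}(p_g)$; outside that set the integrand vanishes and $D$ may be defined arbitrarily. I would state this restriction explicitly. A secondary point worth a line of justification is the interchange that lets pointwise maximization deliver the global maximum of the integral: this is legitimate precisely because no constraint couples $D$ across different $\bm{x}$, so the supremum of the integral equals the integral of the pointwise suprema. With those two remarks in place the proposition follows immediately, and the result feeds directly into the subsequent reformulation of the minimax objective once $D^*_G$ is substituted back into $V$.
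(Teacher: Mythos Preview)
Your proposal is correct and follows essentially the same route as the paper: rewrite $V(G,D)$ as a single integral over $\bm{x}$ via the pushforward of $p_{\bm{z}}$ by $G$, then maximize $a\log y + b\log(1-y)$ pointwise to obtain $y=a/(a+b)$, with the caveat that $D$ need only be defined on $\mathrm{Supp}(p_\text{data})\cup\mathrm{Supp}(p_g)$. Your write-up is in fact a bit more careful than the paper's, which simply asserts the maximizer of $a\log y + b\log(1-y)$ without the derivative computation and does not explicitly justify the passage from pointwise to global optimality.
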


\begin{proof}
The training criterion for the discriminator D, given any generator $G$, is to maximize the quantity $V(G, D)$
\begin{align}
V(G, D) =& \int_{\bm{x}} p_\text{data}(\bm{x}) \log(D(\bm{x})) dx + \int_z p_{\bm{z}}(\bm{z}) \log(1 - D(g(\bm{z}))) dz \nonumber \\
\label{eq:cost-for-D}
		=& \int_{\bm{x}} p_\text{data}(\bm{x}) \log(D(\bm{x})) + p_g(\bm{x}) \log(1 - D(\bm{x})) dx
\end{align}

For any $(a,b) \in \mathbb{R}^2 \setminus \{0, 0\} $, the function $y \rightarrow a \log (y) + b \log (1-y)$ achieves its maximum in $[0,1]$ at $\frac{a}{a+b}$. The discriminator does not need to be defined outside of $Supp(p_\text{data}) \cup Supp(p_g)$, concluding the proof.
\end{proof}

Note that the training objective for $D$ can be interpreted as maximizing the log-likelihood for estimating the conditional probability $P(Y=y|\bm{x})$, where $Y$ indicates whether $\bm{x}$ comes from $p_\text{data}$ (with $y=1$) or from $p_g$ (with $y=0$). The minimax game in Eq.~\ref{eq:minimaxgame-definition} can now be reformulated as:

\begin{align}
\label{eq:G-criterion}
 C(G) =& \max_D V(G,D) \nonumber \\
      =& \mathbb{E}_{\bm{x} \sim p_\text{data}}[\log D^*_G(\bm{x})] + \mathbb{E}_{\bm{z} \sim p_{\bm{z}}}[\log (1 - D^*_G(G(\bm{z})))] \\
      =& \mathbb{E}_{\bm{x} \sim p_\text{data}}[\log D^*_G(\bm{x})] + \mathbb{E}_{\bm{x} \sim p_g}[\log (1 - D^*_G(\bm{x}))] \nonumber \\
      =& \mathbb{E}_{\bm{x} \sim p_\text{data}}\left[\log \frac{p_\text{data}(\bm{x})}{P_\text{data}(\bm{x}) + p_g(\bm{x})}\right] + 
         \mathbb{E}_{\bm{x} \sim p_g}\left[\log \frac{p_g(\bm{x})}{p_\text{data}(\bm{x}) + p_g(\bm{x})}\right] \nonumber  
\end{align}

\begin{theorem}
\label{thm:global-optimality}
The global minimum of the virtual training criterion $C(G)$ is achieved if and only if $p_g=p_\text{data}$.
At that point, $C(G)$ achieves the value $-\log 4$.
\end{theorem}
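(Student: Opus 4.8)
The plan is to reduce $C(G)$ to a standard divergence between $p_\text{data}$ and $p_g$ whose minimizer is transparent. First I would verify the candidate value directly: substituting $p_g = p_\text{data}$ gives $D^*_G(\bm{x}) = \frac{1}{2}$ everywhere, so each logarithm in the final displayed expression for $C(G)$ becomes $\log\frac{1}{2}$, and the two expectations sum to $-\log 2 - \log 2 = -\log 4$. This establishes that the value $-\log 4$ is attained when $p_g = p_\text{data}$; what remains is to show no configuration does better and that this one is the unique minimizer.

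For the main argument I would rewrite $C(G)$ by inserting a factor of $2$ inside each logarithm, compensated by subtracting $\log 2$ twice. Concretely, writing $\frac{p_\text{data}}{p_\text{data}+p_g} = \frac{1}{2}\cdot\frac{p_\text{data}}{(p_\text{data}+p_g)/2}$, and symmetrically for the $p_g$ term, turns each denominator into the mixture $\frac{p_\text{data}+p_g}{2}$, which is itself a valid probability density. This yields
\[
C(G) = -\log 4 + \mathrm{KL}\!\left(p_\text{data}\,\Big\|\,\frac{p_\text{data}+p_g}{2}\right) + \mathrm{KL}\!\left(p_g\,\Big\|\,\frac{p_\text{data}+p_g}{2}\right),
\]
where the two expectations have become genuine Kullback--Leibler divergences precisely because each numerator and each denominator integrates to $1$.

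I would then invoke non-negativity of the KL divergence: each term is $\geq 0$, so $C(G) \geq -\log 4$ with equality if and only if both divergences vanish. Since $\mathrm{KL}(p\,\|\,q)=0$ exactly when $p=q$, equality forces $p_\text{data} = \frac{p_\text{data}+p_g}{2}$, i.e.\ $p_g = p_\text{data}$. Equivalently, the sum of the two KL terms equals $2\,\mathrm{JSD}(p_\text{data}\,\|\,p_g)$, twice the Jensen--Shannon divergence, which is non-negative and zero exactly when its arguments coincide. Combined with the first step, this proves both the value and the uniqueness claim.

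The step I expect to be the crux is the algebraic normalization in the second paragraph. As stated, $C(G)$ is a sum of log-ratios whose denominators are $p_\text{data}+p_g$, which does not integrate to $1$, so the expression is not literally a KL divergence until the factors of $2$ are introduced. Peeling off exactly two copies of $\log 2$, matching the $-\log 4$ constant, is what lets the non-negativity argument apply cleanly; everything after that is a direct appeal to standard properties of KL, or equivalently of the Jensen--Shannon divergence.
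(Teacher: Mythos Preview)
Your proposal is correct and matches the paper's proof essentially step for step: both first evaluate $C(G)$ at $p_g=p_\text{data}$ to get $-\log 4$, then normalize the denominators to $(p_\text{data}+p_g)/2$ by extracting two copies of $\log 2$, identify the result as $-\log 4$ plus two KL terms (equivalently $2\,\mathrm{JSD}$), and conclude via non-negativity and the equality condition of the divergence.
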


\begin{proof}
For $p_g=p_\text{data}$, $D^*_G(\bm{x})=\frac{1}{2}$, (consider Eq.~\ref{eq:optimal-D}). Hence, by inspecting Eq.~\ref{eq:G-criterion} at $D^*_G(\bm{x})=\frac{1}{2}$, we find $C(G)=\log\frac{1}{2} + \log\frac{1}{2}= - \log 4$. To see that this is the best possible value of $C(G)$, reached only for $p_g = p_\text{data}$, observe that
\[
 \mathbb{E}_{\bm{x} \sim p_\text{data}} \left[ - \log 2 \right] + \mathbb{E}_{\bm{x} \sim p_g} \left[ - \log 2 \right] = -\log 4
\]
and that by subtracting this expression from $C(G) = V(D_G^*,G)$, we obtain:

\begin{equation}
C(G) = -\log(4) + KL \left(p_\text{data} \left \| \frac{p_\text{data} + p_g}{2} \right. \right) + KL \left(p_g \left \| \frac{p_\text{data} + p_g}{2} \right. \right)
\end{equation}

where KL is the Kullback--Leibler divergence. We recognize in the previous expression the Jensen--Shannon divergence between the model's distribution and the data generating process:

\begin{equation}
C(G) = - \log(4) + 2 \cdot JSD \left(p_\text{data} \left \| p_g \right. \right)
\end{equation}

Since the Jensen--Shannon divergence between two distributions is always non-negative and zero only
when they are equal, we have shown that $ C^* = - \log (4)$ is the global minimum of $C(G)$ and
that the only solution is $p_g=p_\text{data}$, i.e., the generative model perfectly replicating the data generating process.
\end{proof}

\subsection{Convergence of Algorithm~\ref{alg:AGF}}
\label{sec:convergence-of-algorithm}

\begin{proposition}
If $G$ and $D$ have enough capacity, and at each step of Algorithm~\ref{alg:AGF}, the discriminator is allowed to reach its optimum given $G$, and $p_g$ is updated so as to improve the criterion $$\mathbb{E}_{\bm{x} \sim p_\text{data}}[\log D^*_G(\bm{x})] + \mathbb{E}_{\bm{x} \sim p_g}[\log (1 - D^*_G(\bm{x}))]$$ then $p_g$ converges to $p_\text{data}$
\end{proposition}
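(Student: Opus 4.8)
The plan is to treat the criterion as a functional of the density $p_g$ and exploit convexity. I would write $U(p_g) = \mathbb{E}_{\bm{x}\sim p_\text{data}}[\log D^*_G(\bm{x})] + \mathbb{E}_{\bm{x}\sim p_g}[\log(1 - D^*_G(\bm{x}))]$ and note that, by the preceding proposition together with Eq.~\ref{eq:G-criterion}, this equals $C(G) = \sup_D V(G,D)$. The central observation is that for each fixed discriminator $D$, the map $p_g \mapsto V(G,D)$ is an affine (hence convex) functional of $p_g$: the only dependence on $p_g$ is through the integral $\int p_g(\bm{x}) \log(1 - D(\bm{x}))\, d\bm{x}$, which is linear in $p_g$. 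Consequently $C(G) = \sup_D V(G,D)$ is a pointwise supremum of convex functionals of $p_g$ and is therefore itself convex in $p_g$.

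First I would make precise that the update prescribed in the statement is a (sub)gradient step for this convex functional. The relevant fact from convex analysis is that if $f = \sup_{\alpha} f_\alpha$ is a supremum of convex functions and $\alpha^\star$ attains the supremum at a point $x$, then the subdifferential of $f_{\alpha^\star}$ at $x$ is contained in $\partial f(x)$. Here $\alpha^\star$ corresponds to the optimal discriminator $D^*_G$, whose existence and closed form are guaranteed by the previous proposition, which is exactly what the hypothesis ``the discriminator is allowed to reach its optimum given $G$'' supplies. Computing the gradient of the criterion with $D$ held fixed at $D^*_G$ therefore yields a subgradient of $C$ at the current $p_g$, and the assumption that $p_g$ is updated so as to improve the criterion means we take steps that decrease the convex functional $C$.

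Next I would invoke Theorem~\ref{thm:global-optimality}: $C$ is a convex functional whose unique global minimizer is $p_g = p_\text{data}$, where it attains the value $-\log 4$. For a convex function with a unique minimum, any procedure that makes sufficiently small improving subgradient updates converges to that minimum; the ``enough capacity'' hypothesis is what lets $p_g$ range freely over the whole space of densities, so that no parametric restriction can trap the iterates away from $p_\text{data}$. Combining the convexity of $C$, the subgradient characterization of the generator update, and the uniqueness of the minimizer from Theorem~\ref{thm:global-optimality} then gives $p_g \to p_\text{data}$.

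The main obstacle I expect is the gap between this idealized function-space argument and the actual procedure in Algorithm~\ref{alg:AGF}, which optimizes over the generator parameters $\theta_g$ rather than over $p_g$ directly. Convexity of $C$ in $p_g$ does not transfer to convexity in $\theta_g$, so the clean global-convergence conclusion depends essentially on the non-parametric assumption; in the parametric case one can only hope for convergence to critical points of $\theta_g \mapsto C(G)$. I would flag that the genuinely technical part is the infinite-dimensional bookkeeping: verifying that the supremum-attaining $D^*_G$ exists at every step, that the regularity needed for the subdifferential inclusion holds, and that the step sizes satisfy conditions guaranteeing convergence rather than mere monotone improvement.
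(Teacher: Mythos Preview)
Your proposal is correct and follows essentially the same argument as the paper: view $V(G,D)$ as a convex (in fact affine) functional of $p_g$ for each fixed $D$, use the subdifferential-of-a-supremum fact to identify the update at $D^*_G$ with a subgradient step on $C(G)=\sup_D V(G,D)$, and then appeal to Theorem~\ref{thm:global-optimality} for uniqueness of the minimizer to conclude convergence. Your write-up is, if anything, more explicit than the paper's, and your closing caveat about the non-parametric idealization versus optimizing over $\theta_g$ matches the paper's post-proof remark.
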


\begin{proof}
Consider $V(G,D)=U(p_g,D)$ as a function of $p_g$ as done in the above criterion. Note that $U(p_g,D)$ is convex in $p_g$. The subderivatives of a supremum of convex functions include the derivative of the function at the point where the maximum is attained. In other words, if $f(x) = \sup_{\alpha \in \cal{A}} f_\alpha(x)$ and $f_\alpha(x)$ is convex in $x$ for every $\alpha$, then $\partial f_\beta(x) \in \partial f$ if $\beta = \arg \sup_{\alpha \in \cal{A}} f_\alpha(x)$. This is equivalent to computing a gradient descent update for $p_g$ at the optimal $D$ given the corresponding $G$. $\sup_D U(p_g,D)$ is convex in $p_g$ with a unique global optima as proven in Thm~\ref{thm:global-optimality}, therefore with sufficiently small updates of $p_g$, $p_g$ converges to $p_x$, concluding the proof.
\end{proof}

In practice, adversarial nets represent a limited family of $p_g$ distributions via the function $G(\bm{z}; \theta_g)$,
and we optimize $\theta_g$ rather than $p_g$ itself. Using a multilayer perceptron to define $G$ introduces multiple
critical points in parameter space.
However, the excellent performance of multilayer
perceptrons in practice suggests that they are a reasonable model to use despite their lack of theoretical guarantees.

\section{Experiments}

We trained adversarial nets an a range of datasets including MNIST\citep{LeCun+98}, the
Toronto Face Database (TFD)~\cite{Susskind2010}, and CIFAR-10~\citep{KrizhevskyHinton2009}.
The generator nets used a mixture of rectifier linear
activations~\cite{Jarrett-ICCV2009,Glorot+al-AI-2011-small} and sigmoid
activations, while the discriminator net used maxout~\cite{Goodfellow_maxout_2013} activations.
Dropout~\cite{Hinton-et-al-arxiv2012} was applied in training the
discriminator net. While our theoretical framework permits the use of dropout and other noise
at intermediate layers of the generator, we used noise as the input to only the bottommost layer
of the generator network.

We estimate probability of the test set data under $p_g$ by fitting a Gaussian Parzen window to the samples
generated with $G$ and reporting the log-likelihood under this distribution.
The $\sigma$ parameter of the Gaussians
was obtained by cross validation on the validation set. This procedure was introduced in  \citet{Breuleux+Bengio-2011}
and used for various generative models for which the exact likelihood is
not
tractable~\citep{Rifai-icml2012,Bengio-et-al-ICML2013,Bengio-et-al-ICML2014}. Results
are reported in Table \ref{table:parzen}. This method of estimating the likelihood has somewhat high variance
and does not perform well in high dimensional spaces but it is the best method available to our knowledge.
Advances in generative models that can sample but not estimate likelihood directly motivate further research
into how to evaluate such models.

\begin{table}
\centering
\begin{tabular}{c|c|c}
Model & MNIST & TFD \\
\hline
DBN~\citep{Bengio-et-al-ICML2013} & $138 \pm 2$ & $1909 \pm 66$ \\
Stacked CAE~\citep{Bengio-et-al-ICML2013} & $121 \pm 1.6$ & $\mathbf{2110 \pm 50}$ \\
Deep GSN~\citep{Bengio-et-al-ICML-2014} & $214 \pm 1.1$ & $1890 \pm 29$ \\
Adversarial nets & $\mathbf{225 \pm 2}$ & $\mathbf{2057 \pm 26}$
\end{tabular}
\caption{\small
Parzen window-based log-likelihood estimates. The reported numbers on MNIST are the mean log-likelihood of samples on test set,
with the standard error of the mean computed across examples.
On TFD, we computed the standard error across folds of the dataset, with a different $\sigma$ chosen using the validation set
of each fold.
On TFD, $\sigma$ was cross validated on each fold and mean log-likelihood on each fold were computed.
For MNIST we compare against other models of the real-valued (rather than binary) version of dataset.}
\label{table:parzen}
\end{table}

In Figures \ref{fig:visuals1} and \ref{fig:visuals3} we
show samples drawn from the generator net after training. 
While we make no claim that these samples are better than
samples generated by existing methods, we believe that these samples are at
least competitive with the better generative models in the literature and
highlight the potential of the adversarial framework.

\begin{figure}[h]
\centering
\begin{tabular}{cc}
 \includegraphics[width=2.6in]{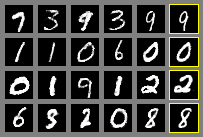} &
 \includegraphics[width=2.6in]{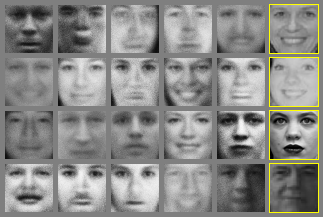} \\
a) & b) \\
 \includegraphics[width=2.6in]{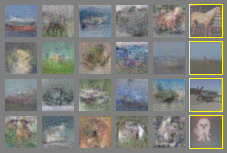} &
 \includegraphics[width=2.6in]{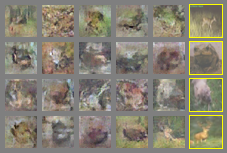} \\
 c) & d)
\end{tabular}
\caption{
\small Visualization of samples from the model. Rightmost column shows the
nearest training example of the neighboring sample, in order to demonstrate
that the model has not memorized the training set. Samples are fair random
draws, not cherry-picked. Unlike most other visualizations of deep generative
models, these images show actual samples from the model distributions, not
conditional means given samples of hidden units. Moreover, these samples are
uncorrelated because the sampling process does not depend on Markov chain mixing.
a) MNIST b) TFD c) CIFAR-10 (fully connected model)
d) CIFAR-10 (convolutional discriminator and ``deconvolutional'' generator)
}
\label{fig:visuals1}
\end{figure}

\begin{figure}[h]
\centering
    \includegraphics[width=6cm]{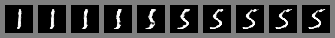} 
    \includegraphics[width=6cm]{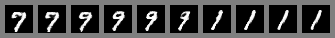} 
    \caption{\small Digits obtained by linearly interpolating between
      coordinates in $\bm{z}$ space of the full model.}
\label{fig:visuals3}
\end{figure}

\newcolumntype{L}[1]{>{\raggedright}m{#1}}
\begin{table}[ht]
\small
\begin{tabular}{c|L{1in}|L{1in}|L{1in}|L{1in}}
& Deep directed graphical models & Deep undirected graphical models & Generative autoencoders & Adversarial models \tabularnewline
\hline
Training & Inference needed during training.
         & Inference needed during training. MCMC needed to approximate partition function gradient.
         & Enforced tradeoff between mixing and power of reconstruction generation
         & Synchronizing the discriminator with the generator. Helvetica.\tabularnewline
\hline
Inference &  Learned approximate inference &  Variational inference &  MCMC-based inference &  Learned approximate inference \tabularnewline
\hline
Sampling & No difficulties &  Requires Markov chain & Requires Markov chain & No difficulties \tabularnewline
\hline
Evaluating $p(x)$ & {  Intractable, may be approximated with AIS} &  Intractable, may be approximated with AIS  &
Not explicitly represented, may be approximated with Parzen density estimation &  Not explicitly represented, may be approximated with Parzen density estimation\tabularnewline
\hline
Model design &  Nearly all models incur extreme difficulty &  Careful design needed to ensure multiple properties
 &  Any differentiable function is theoretically permitted &  Any differentiable function is theoretically permitted
\end{tabular}
\caption{\small Challenges in generative modeling: a summary of the difficulties encountered by different approaches to deep generative modeling for each of
the major operations involving a model.}
\label{table:comparison}
\normalsize
\end{table}
\section{Advantages and disadvantages}

This new framework comes with advantages and disadvantages relative to previous
modeling frameworks. The disadvantages are primarily that there is no explicit
representation of $p_g(\bm{x})$, and that $D$ must be synchronized well with $G$
during training (in particular, $G$ must not be trained too much without updating
$D$, in order to avoid ``the Helvetica scenario'' in which $G$ collapses too many
values of $\mathbf{z}$ to the same value of $\mathbf{x}$ to have enough diversity
to model $p_\text{data}$),
much as the negative chains of a Boltzmann machine must be
kept up to date between learning steps. The advantages are that Markov chains
are never needed, only backprop is used to obtain gradients, 
no inference is needed during learning, and a wide variety of
functions can be incorporated into the model.
Table~\ref{table:comparison} summarizes the comparison of generative adversarial nets with other 
generative modeling approaches.

The aforementioned advantages are primarily computational. Adversarial models may
also gain some statistical advantage from the generator network not being updated
directly with data examples, but only with gradients flowing through the discriminator.
This means that components of the input are not copied directly into the generator's
parameters. Another advantage of adversarial networks is that they can represent
very sharp, even degenerate distributions, while methods based on Markov chains require
that the distribution be somewhat blurry in order for the chains to be able to mix
between modes.

\section{Conclusions and future work}

This framework
admits many straightforward extensions:
\begin{enumerate}[noitemsep,leftmargin=0.4cm]
\item A {\em conditional generative} model $p(\bm{x} \mid \bm{c})$ can be obtained by adding $\bm{c}$
as input to both $G$ and $D$.
\item {\em Learned approximate inference} can be performed by training an auxiliary network to predict
    $\bm{z}$ given $\bm{x}$. This is similar to the inference net trained by the wake-sleep algorithm ~\citep{Hinton95}
but with the advantage that the inference net may be trained for a fixed generator net after the generator
net has finished training.
\item One can approximately model all conditionals $p(\bm{x}_S \mid \bm{x}_{\not S})$ where $S$ is a subset of
the indices of $\bm{x}$
by training a family of conditional models that share parameters. Essentially, one can use adversarial nets to implement
 a stochastic extension of the deterministic MP-DBM \cite{Goodfellow-et-al-NIPS2013-small}.
\item {\em Semi-supervised learning}: features from the discriminator or inference net could improve performance
of classifiers when limited labeled data is available.
\item {\em Efficiency improvements:} training could be accelerated greatly by divising better methods for
coordinating $G$ and $D$ or determining better distributions to sample $\mathbf{z}$ from during training.
\end{enumerate}
This paper has demonstrated the viability of the adversarial modeling framework, suggesting that these research
directions could prove useful.

\subsubsection*{Acknowledgments}
We would like to acknowledge Patrice Marcotte, Olivier Delalleau, Kyunghyun Cho, Guillaume Alain and Jason Yosinski for helpful discussions.
Yann Dauphin shared his Parzen window evaluation code with us.
We would like to thank the developers of
Pylearn2~\citep{pylearn2_arxiv_2013}
and
Theano~\citep{bergstra+al:2010-scipy,Bastien-Theano-2012}, particularly
Fr\'ed\'eric Bastien who rushed a Theano feature specifically to benefit this project.
Arnaud Bergeron provided much-needed support with \LaTeX\ typesetting.
We would also like to thank CIFAR, and Canada Research Chairs for funding, and Compute Canada, and Calcul Qu\'ebec
for providing computational resources. Ian Goodfellow is supported by the 2013 Google Fellowship in Deep
Learning. Finally, we would like to thank Les Trois Brasseurs for stimulating our creativity.

{\small
\bibliography{strings,strings-shorter,ml,aigaion-shorter}
\bibliographystyle{natbib}}

\end{document}